%% file: ijcai26.tex
\documentclass{article}
\usepackage{ijcai26}

\usepackage{times}
\usepackage{soul}
\usepackage{url}
\usepackage[hidelinks]{hyperref}
\usepackage[utf8]{inputenc}
\usepackage[small]{caption}
\usepackage{graphicx}
\usepackage{amsmath}
\usepackage{amsthm}
\usepackage{booktabs}
\usepackage{algorithm}
\usepackage{algorithmic}
\usepackage[switch]{lineno}

\newcommand{\vpara}[1]{\vspace{0.05in}\noindent\textbf{#1 }}
\newcommand{\vpait}[1]{\vspace{0.05in}\noindent\textit{#1 }}

\usepackage{makecell}
\usepackage{multirow}
\usepackage{booktabs}
\usepackage{amsmath}
\usepackage{amssymb}

\newtheorem{proposition}{Proposition}

\title{UniSAGE: Unifying Static and Dynamic Attributes with Hyper-Structure}

\author{
Taoran Fang$^1$
\and
Yan Deng$^1$\and
Chunping Wang$^2$\and
Yang Wang$^2$\and
Lei Chen$^2$ \And
Yang Yang$^{1}$\thanks{Corresponding author.}\\
\affiliations
$^1$Zhejiang University\\
$^2$FinVolution Group\\
\emails
\{fangtr, dyan, yangya\}@zju.edu.cn,
\{wangchunping02, wangyang09, chenlei04\}@xinye.com
}

\begin{document}

\maketitle

\input{00_abstract}
\input{01_introduction}

\input{02_related_work}
\input{03_preliminary}

\input{04_methodology}
\input{05_experiment}

\input{06_conclusion}

\bibliographystyle{named}
\bibliography{reference}

\clearpage
\input{07_appendix}

\end{document}

%% file: 00_abstract.tex
\begin{abstract}

With the rapid growth of digital data, real-world applications increasingly involve hierarchical information that combines static attributes with dynamic records.
Modeling such heterogeneous data in a unified and generalizable manner remains challenging.
Existing approaches often rely on extensive manual design, are tightly coupled to specific data schemas, and typically process static and dynamic attributes in isolation, thereby overlooking their implicit interactions.
We propose UniSAGE, a unified framework for modeling data with both static and dynamic attributes.
UniSAGE constructs a global attribute graph that represents hierarchical and temporal relationships in a unified structure.
To ensure representational consistency, it introduces two orthogonal parameter subspaces that jointly support static aggregation and dynamic reasoning within a shared semantic space.
Building on these unified representations, UniSAGE further enables task-specific interaction between static and dynamic attributes via a lightweight hyper-structure mechanism.
UniSAGE is fully automated, robust to evolving data schemas, and capable of capturing complex cross-attribute dependencies.
Extensive experiments on multiple public benchmarks and a real-world financial behavior dataset demonstrate that UniSAGE consistently outperforms existing methods, achieving performance improvements of over 10\% on several tasks.
Our code is available at \url{https://github.com/zjunet/UniSAGE}.


\end{abstract}

%% file: 01_introduction.tex
\section{Introduction}

In the era of big data, modern information systems generate and collect data that are increasingly diverse, heterogeneous, and large-scale.
From financial transactions and e-commerce activities to healthcare records and social behaviors, such data are used to model a wide range of real-world entities.
As data sources continue to grow in both complexity and volume, efficiently extracting meaningful representations to support downstream tasks remains a fundamental challenge~\cite{wang2018review,xiao2022survey}.

This paper focuses on feature extraction from a class of complex data that we refer to as hierarchical data with both static and dynamic components.
Such data typically comprise static attributes (\emph{e.g.}, age, gender, product category) together with dynamic records (\emph{e.g.}, user actions, transaction sequences, event logs).
In practice, these heterogeneous attributes are often organized in semi-structured formats such as JSON, where entities exhibit hierarchical schemas with nested attributes and timestamped logs.
This data paradigm is ubiquitous across real-world applications, including recommendation systems~\cite{middleton2004ontological,hossain2024socialrec}, healthcare records~\cite{davis2014patient,agniel2018biases}, and financial transaction monitoring~\cite{jayathilake2012towards,sheble2009transaction}.
Effectively modeling such data is critical for downstream tasks such as classification~\cite{oza2008classifier}, risk assessment~\cite{gritzalis2018exiting}, and recommendation~\cite{javed2021review}.

However, effectively modeling hierarchical data with static and dynamic attributes remains challenging.
First, static and dynamic attributes are often modeled separately, leading to incompatible representation spaces and fragmented semantic reasoning.
As a result, capturing a holistic entity representation typically requires extensive manual model engineering, which is costly and difficult to scale.
Second, many existing approaches are tightly coupled to fixed data schemas, limiting their adaptability.
In real-world industrial systems, data schemas frequently evolve as attributes are added or modified, yet most models require retraining from scratch to accommodate such changes.
Third, capturing interactions between static and dynamic attributes is inherently difficult.
In many applications, these two attribute types are strongly interdependent.
For example, in financial lending, the combination of recent loan activities (dynamic) and age-related information (static) often jointly determines default risk.
Nevertheless, most existing models process static and dynamic attributes in isolation, missing crucial cross-type correlations that are essential for accurate decision-making.

To address these challenges, we propose \textbf{Uni}fying \textbf{S}tatic and dynamic \textbf{A}ttributes with \textbf{G}raph \textbf{E}nhancement (UniSAGE), a unified framework that models static and dynamic attributes within a shared representation space while explicitly capturing their interactions.
UniSAGE first constructs a global attribute graph that reflects the hierarchical structure of the original data.
Static attributes and timestamped dynamic records are represented uniformly as nodes, with static links encoding hierarchical relations and dynamic links modeling temporal dependencies.
This unified graph provides a structured modeling space in which information can be propagated coherently across attribute types.
The representation of an entity is obtained by aggregating information from all nodes into a root node.
To ensure representational consistency, UniSAGE introduces a joint parameterization scheme with two orthogonal subspaces for static aggregation and dynamic reasoning.
Orthogonality prevents mutual interference between the two modeling paradigms, allowing static and dynamic attributes to be processed independently while remaining embedded in a unified semantic space.
As a result, UniSAGE no longer distinguishes static and dynamic attributes at the representation level, but only through their respective propagation paths.
This design enables automatic handling of heterogeneous attributes and provides strong generalizability to evolving data schemas.
Building on these unified representations, UniSAGE further constructs task-specific hyper-structures to capture higher-order interactions between selected static and dynamic attributes.
These hyper-structures introduce specialized reasoning paths tailored to downstream objectives.
To avoid the high computational cost of explicit hyper-structure construction, we propose \textbf{S}elective \textbf{S}emantic \textbf{Agg}regation (SSAgg), a lightweight mechanism that selectively aggregates task-relevant information.
We theoretically show that SSAgg can simulate arbitrary hyper-structures, and empirically demonstrate its effectiveness across diverse tasks.
Overall, our contributions are summarized as follows:
\begin{itemize}
\item We propose UniSAGE, a unified framework that models static and dynamic attributes in a shared representation space, enabling automated and generalizable learning over hierarchical data.
\item We introduce SSAgg, a theoretically grounded and computationally efficient mechanism for capturing task-specific interactions across static and dynamic attributes.
\item Extensive experiments on public benchmarks and a real-world financial dataset demonstrate that UniSAGE consistently outperforms existing methods, achieving improvements of over 10\% on several tasks.
\end{itemize}

%% file: 02_related_work.tex
\section{Related Work}

\vpara{Graph-Based Encoding.}
Transforming complex or semi-structured data into graph representations is a common paradigm in data mining and representation learning~\cite{jeon2013making,patil2018survey}.
Such graph-based encodings have been widely applied in recommendation systems~\cite{wang2021graph}, healthcare~\cite{hassanain2022evaluating,arch2017graph}, and finance~\cite{zehra2021financial,zhang2023dynamic,onnela2003asset}, where rich dependencies among heterogeneous components must be modeled explicitly.
Graph neural networks (GNNs) are a natural choice for learning over such graph-structured data~\cite{wu2020comprehensive,zhou2020graph}.
Static GNN models~\cite{kipf2016semi,hamilton2017inductive,velivckovic2017graph} focus on learning representations from fixed graph structures and are effective in capturing hierarchical or relational patterns.
However, they are not designed to handle temporal evolution.
To address this limitation, dynamic GNNs~\cite{skarding2021foundations,feng2024comprehensive,yang2024dynamic} extend static models by incorporating temporal information and evolving graph structures.
Despite their success, most dynamic GNNs assume that temporal information is aligned across nodes or edges, or that dynamic interactions are defined over a shared timeline.
This assumption is often violated in real-world scenarios involving heterogeneous attributes and asynchronous records, where different attributes may have varying numbers of events and non-aligned timestamps.
Moreover, existing graph-based methods typically treat static and dynamic information using different modeling pipelines, which complicates representation alignment and parameter sharing.
In contrast, UniSAGE adopts an attribute-level graph formulation in which both static attributes and dynamic records are treated as nodes.
By jointly encoding hierarchical relations among attributes and temporal dependencies among records, this formulation avoids the need for strict temporal alignment and enables unified processing of heterogeneous information.
In our experiments, we include representative static and dynamic GNN models as baselines to evaluate the effectiveness of this unified design.

\vpara{Relational Deep Learning.}
Relational deep learning (RDL)~\cite{fey2023relational,dwivedi2025relational,robinson2024relbench,chen2025relgnn} studies learning from relational data stored in tabular databases, where entities are described by both static attributes and dynamic records.
RDL methods typically construct graphs by treating each table row (i.e., record) as a node and connecting nodes according to foreign-key relationships.
This formulation has shown strong performance on benchmark datasets such as RelBench~\cite{fey2023relational}.
While RDL and UniSAGE address similar data modalities, their modeling strategies differ fundamentally.
RDL operates at the record level and relies on schema-defined relations between rows, which limits its ability to explicitly capture hierarchical dependencies among attributes.
In contrast, UniSAGE adopts an attribute-centric modeling paradigm, where both static and dynamic attributes are explicitly represented and organized according to hierarchical and temporal relations.
This design enables finer-grained reasoning over heterogeneous attributes and naturally supports recursive, multi-level representations.
We evaluate UniSAGE on the RelBench benchmark and directly compare it with representative RDL methods~\cite{fey2023relational}.
The results demonstrate that UniSAGE is more effective in modeling complex hierarchical and dynamic structures, especially in settings where attributes exhibit heterogeneous schemas and asynchronous records.

%% file: 03_preliminary.tex
\section{Preliminary}

\vpara{Categories of Data Attributes.}
In real-world applications, complex data used for modeling is commonly represented in tabular or semi-structured formats such as JSON.
Attributes in such data can be broadly categorized as either \textit{static} or \textit{dynamic}, depending on whether their values evolve over time.

\vpait{Static Attribute.}
A static attribute refers to a property of an entity whose value remains constant or changes negligibly over a sufficiently long period.
For instance, in a credit loan system, user characteristics such as gender or place of birth are typically treated as static attributes.
The value of a static attribute $s$, denoted as $x_s$, depends only on the associated entity $e$ and is defined as
\begin{align}
x_s(e) = f_s(e),
\end{align}
where $f_s(\cdot)$ is the mapping function corresponding to the static attribute $s$.

\vpait{Dynamic Attribute.}
Dynamic attributes describe properties whose values change over time or accumulate records as time progresses.
For example, a user’s residential address may change over time, and a user’s borrowing history grows as new loan records are added.
The value of a dynamic attribute $d$ at time $t$, denoted as $x_d^t$, depends on both the entity $e$ and the timestamp $t$:
\begin{align}
x_d^t(e) = f_d(e, t).
\end{align}
We represent a dynamic attribute as a time-indexed collection of records,
\begin{align}
x_d(e) = \{(x_d^t(e), t) \mid t \in [1, T]\},
\end{align}
where $f_d(\cdot)$ is the mapping function associated with attribute $d$, and $T$ denotes the number of observed records.
In practice, dynamic attributes may vary across entities in both sequence length and timestamp distribution.
For example, different users typically have different numbers of loan records, and their timestamps are rarely aligned.

\vpara{Overall Objective.}
Given an entity $e$ with its associated static and dynamic attributes, our goal is to learn a representation $h_e$ through a function $\phi$:
\begin{align}
\label{eq:target}
h_e = \phi\big(\{x_s(e) \mid s \in \mathcal{S}\}, \{x_d(e) \mid d \in \mathcal{D}\}\big),
\end{align}
where $\mathcal{S}$ and $\mathcal{D}$ denote the sets of static and dynamic attributes, respectively.
The learned representation $h_e$ serves as the input to downstream tasks such as classification, regression, or user modeling.

%% file: 04_methodology.tex
\section{Methodology}

\begin{figure*}[tbp]
\centering
\includegraphics[width=0.9\textwidth]{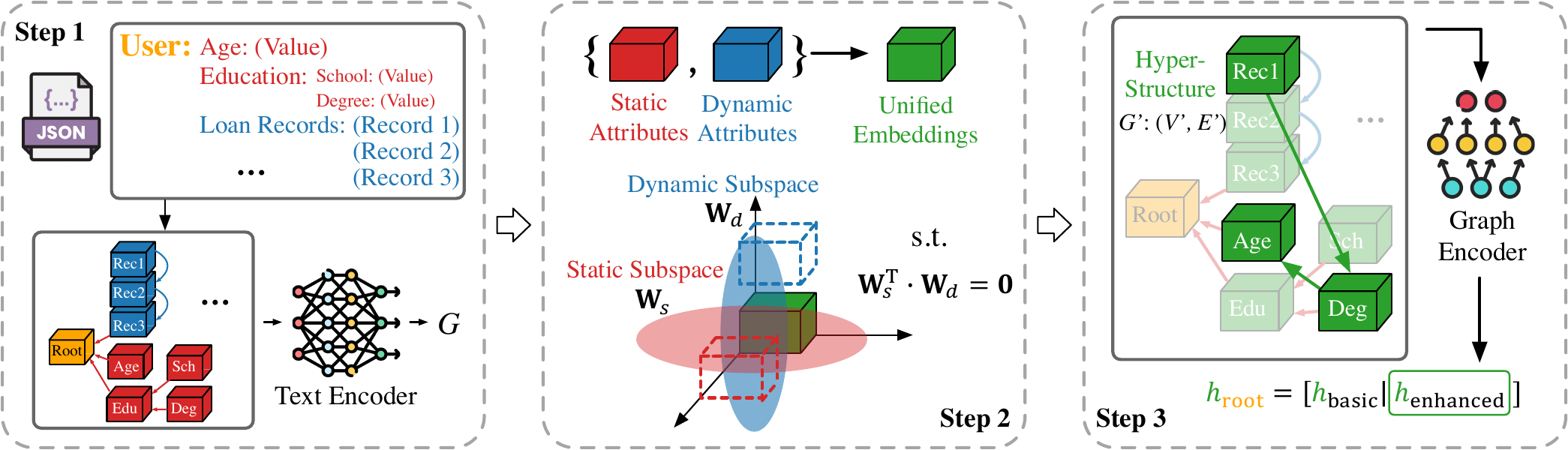}
\caption{Overview of UniSAGE.
\textbf{Step 1:} Construct a global attribute graph from JSON, with nodes as attributes/records and edges encoding hierarchical (static) and temporal (dynamic) relations.
\textbf{Step 2:} Learn unified representations via orthogonal subspaces for non-interfering static aggregation and dynamic reasoning.
\textbf{Step 3:} Capture task-specific cross-type interactions through a lightweight hyper-structure mechanism (SSAgg) to enhance downstream prediction.}
\label{fig:main_figure}
\end{figure*}

\subsection{Hierarchical Graph Construction}
\label{sec:hgc}
We represent each entity instance as a directed hierarchical graph $G$ that unifies both static attributes and dynamic records.
The graph is organized as a rooted tree, where edges point from sub-attributes (or earlier records) to their parents (or later records), and the root aggregates all information into an entity embedding.

\vpara{Node Generation.}
Each attribute is instantiated as a node following the hierarchical schema.
Given an attribute $a$, we recursively create a node $v_a$ and expand its sub-attributes $\operatorname{Sub}(a)$ until reaching leaf attributes:
\begin{align}
g(a)=
\begin{cases}
v_a, & \text{if } \operatorname{Sub}(a)=\emptyset,\\
v_a \rightarrow \{g(a') \mid a' \in \operatorname{Sub}(a)\}, & \text{otherwise}.
\end{cases}
\end{align}
For a dynamic attribute $d$ with $T$ timestamped records, we treat each record as an individual node $\{v_{d_t}\}_{t=1}^T$ (and recursively expand its sub-attributes if present):
\begin{align}
g(d)=
\begin{cases}
\{v_{d_t}\}_{t=1}^T, & \text{if } \operatorname{Sub}(d_t)=\emptyset\ \forall t,\\
\{v_{d_t}\}_{t=1}^T \rightarrow \{g(d_t)\}_{t=1}^T, & \text{otherwise}.
\end{cases}
\end{align}

\vpara{Static Links.}
Static links encode the schema hierarchy: for each parent attribute $a_j$ and its (static) sub-attribute $a_i \in \operatorname{Sub}(a_j)$, we add a directed edge $v_{a_i}\!\to\! v_{a_j}$.
We define the static adjacency matrix $\mathbf{A}_s\in\mathbb{R}^{N\times N}$ as
\begin{align}
\mathbf{A}_s(i,j)=
\begin{cases}
1, & \text{if } a_i \in \operatorname{Sub}(a_j),\\
0, & \text{otherwise}.
\end{cases}
\end{align}
When a sub-attribute is dynamic (i.e., it corresponds to a record sequence $\{d_t\}_{t=1}^T$), connecting the parent to all records would introduce redundant paths and inflate computation.
Instead, we connect the parent only to the most recent record node $d_T$, which preserves the latest state while keeping the hierarchy concise:
\begin{align}
\mathbf{A}_s(d_T, a_j) &= 1,\\
\mathbf{A}_s(d_t, a_j) &= 0,\quad t<T,\quad \text{for } \{d_t\}_{t=1}^{T}\in \operatorname{Sub}(a_j).
\end{align}

\vpara{Dynamic Links.}
Dynamic links encode temporal order within each dynamic attribute.
For consecutive records of the same attribute, we add directed edges from earlier to later records:
\begin{align}
\mathbf{A}_d(i,j)=
\begin{cases}
1, & \text{if } a_i=d_t \text{ and } a_j=d_{t+1},\\
0, & \text{otherwise}.
\end{cases}
\end{align}

Finally, we add a root node $r$ and connect each top-level attribute node to $r$ with a directed edge, forming a rooted tree.
The entity representation is obtained from the final embedding of the root node $r$.

\subsection{Orthogonal Transformation Design}
\label{sec:otd}
UniSAGE performs bottom-up propagation on $G$ to compute the root representation.
A key challenge is that static aggregation (hierarchical composition) and dynamic reasoning (temporal modeling) typically rely on different operators (\emph{e.g.}, GNN/MLP vs. RNN/GRU), which can distort semantics and hinder parameter sharing even when inputs are encoded in the same embedding space.
To address this, we introduce two \emph{orthogonal} parameter subspaces: $\mathbf{W}_s$ for static aggregation and $\mathbf{W}_d$ for dynamic reasoning.
Orthogonality encourages the two pathways to be non-interfering while keeping representations in a shared semantic space.
Formally, we first map node features into a high-dimensional space $\mathbf{H}\in\mathbb{R}^{N\times k}$.
We then constrain parameters for static and dynamic transformations to lie in the spans of $\mathbf{W}_s$ and $\mathbf{W}_d$, respectively, and enforce
\begin{align}
\theta_{s}\in \text{span}(\mathbf{W}_{s}),\ \theta_{d}\in \text{span}(\mathbf{W}_{d})
\quad \text{s.t. } \mathbf{W}_{s}^\top \mathbf{W}_{d} = \mathbf{0}.
\end{align}
Here $\text{span}(\mathbf{W})$ denotes the subspace spanned by columns of $\mathbf{W}$.
This constraint reduces cross-talk between static and dynamic transformations, stabilizing training and enabling consistent semantics under different propagation paths.

\subsection{Hyper-Structure Generation}
\label{sec:hsg}
The global graph $G$ captures hierarchical and temporal relations but does not explicitly model \emph{task-specific} interactions between static and dynamic attributes.
However, many decisions depend on cross-type combinations (e.g., recent behaviors conditioned on demographics).
To capture such interactions, we define a task-specific hyper-structure as a subgraph $G'=(\mathcal{V}',\mathcal{E}')$ induced from $G=(\mathcal{V},\mathcal{E})$, where $\mathcal{V}'\subseteq \mathcal{V}$ contains task-relevant nodes and $\mathcal{E}'$ connects them to form specialized reasoning paths.
We then compute a hyper-structure representation via a graph encoder $f$:
\begin{align}
\label{formula:hsg}
h_{G^\prime}=f(\mathcal{V}^\prime,\mathcal{E}^\prime),\ \text{s.t.}\ \mathcal{V}^\prime \subseteq \mathcal{V}.
\end{align}
In practice, we instantiate $f$ with a GNN due to its strong relational reasoning ability~\cite{xu2020neural,xu2019can,Yang2022GraphNN,Yang2023HowGN,Dudzik2022GraphNN}.
Finally, $h_{G'}$ is combined with the root representation $h_r$ for downstream prediction.

\subsection{Practical Implementation}
\label{sec:pi}
We now describe the end-to-end implementation (Figure~\ref{fig:main_figure}), corresponding to Sections~\ref{sec:hgc}--\ref{sec:hsg}.

\vpara{Step 1: Global graph and node features.}
We construct $G$ using $\mathbf{A}_s$ and $\mathbf{A}_d$ as in Section~\ref{sec:hgc}.
Each node is associated with a text input that describes its attribute/value.
Specifically, for leaf nodes we use ``attribute name: value'';
for intermediate attribute nodes we use ``attribute name: [attribute flag]'';
and for the root node we use a special marker ``[root flag]''.
We encode each node text with a pretrained text encoder:
\begin{align}
x_{\text{raw}}(v)&=
\begin{cases}
\text{``attribute name: value''},\ \text{if } v \text{ is a leaf node},\\
\text{``[root flag]''}, \ \text{if } v \text{ is the root node},\\
\text{``attribute name: [attribute flag]''}, \ \text{otherwise},
\end{cases} \nonumber\\
x(v) &= \text{TextEncoder}\big(x_{\text{raw}}(v)\big),
\end{align}
yielding the node feature matrix $\mathbf{X}\in\mathbb{R}^{N\times k}$.

\vpara{Step 2: Unified propagation with orthogonal subspaces.}
We first project $\mathbf{X}$ into $\mathbf{H}$:
\begin{align}
\mathbf{H}=\mathbf{X}\mathbf{W},
\end{align}
where $\mathbf{W}\in\mathbb{R}^{k\times k'}$ is learnable.
We then define two subspace bases $\mathbf{W}_s\in\mathbb{R}^{k'\times k_s}$ and $\mathbf{W}_d\in\mathbb{R}^{k'\times k_d}$ and enforce orthogonality:
\begin{align}
\mathbf{W}_s^\top\mathbf{W}_d=\mathbf{0}.
\end{align}
We implement this constraint by adding the orthogonality loss
\begin{align}
\label{formula:lo}
l_o=\|\mathbf{W}_s^\top\mathbf{W}_d\|_F^2,
\end{align}
while noting that numerical orthogonalization (e.g., Gram--Schmidt) is also applicable~\cite{bjorck1994numerics,leon2013gram,bjorck1967solving}.

For static links $\mathbf{A}_s$, we perform hierarchical aggregation where messages are mainly transformed in the static subspace while retaining complementary information from the dynamic subspace:
\begin{align}
h_j^* &= \Big(\alpha\, h_j\mathbf{W}_s+\!\!\sum_{\mathbf{A}_s(i,j)=1}\!\!\beta_i\, h_i^*\mathbf{W}_s\Big)\mathbf{W}_s^+\nonumber\\
&+\frac{1}{n+1}\Big(h_j\mathbf{W}_d+\!\!\sum_{\mathbf{A}_s(i,j)=1}\!\!h_i^*\mathbf{W}_d\Big)\mathbf{W}_d^+,
\label{formula:sg}
\end{align}
where $h$ is the current representation, $h^*$ is the updated representation (with $h^*=h$ for leaf nodes), $n$ is the number of children of $v_j$, and $\alpha,\{\beta_i\}$ are learnable aggregation weights (analogous to attention weights in GAT~\cite{velivckovic2017graph}).
$\mathbf{W}^+$ denotes the pseudo-inverse of $\mathbf{W}$.

For dynamic links $\mathbf{A}_d$, we use a temporal encoder (GRU by default) to propagate along time:
\begin{align}
h_j^* &= \underset{\mathbf{A}_d(i,j)=1}{T_{\text{encoder}}}\big(\text{state}(i),\, h_j\mathbf{W}_d\big)\mathbf{W}_d^+\nonumber
\\&+\frac{1}{2}(h_j+h_i^*)\mathbf{W}_s\mathbf{W}_s^+,
\label{formula:di}
\end{align}
where $\text{state}(i)$ is the hidden state carried from the previous record.
Eqs.~\ref{formula:sg}--\ref{formula:di} implement non-interfering static/dynamic processing: static aggregation primarily operates in $\mathbf{W}_s$ and dynamic reasoning in $\mathbf{W}_d$, while both preserve complementary information to maintain a shared semantic space.
We apply these updates bottom-up from leaves to the root and obtain $h_r^*$ as the basic entity representation:
\begin{align}
h_{\text{basic}}=h_r^*.
\end{align}

\vpara{Step 3: Lightweight hyper-structure via SSAgg.}
Explicitly selecting $\mathcal{V}'$ and constructing $\mathcal{E}'$ can be expensive.
We therefore propose \textbf{S}elective \textbf{S}emantic \textbf{Agg}regation (SSAgg), which implicitly simulates task-specific hyper-structures by re-propagating messages over a unified adjacency matrix:
\begin{align}
\mathbf{A}'=\mathbf{A}_s+\mathbf{A}_d\in\mathbb{R}^{N\times N}.
\end{align}
SSAgg computes attention scores and selectively aggregates incoming messages:
\begin{align}
m_{i,j} &= \psi(h_i^*, h_j^*),\quad \text{for } \mathbf{A}'(i,j)=1,
\label{formula:ssagg_1}\\
\beta_{i,j}&=\frac{\exp(m_{i,j})}{\exp(\lambda)+\sum_{\mathbf{A}'(k,j)=1}\exp(m_{k,j})},
\label{formula:ssagg_2}\\
h_j'&=\sum_{\mathbf{A}'(i,j)=1}\beta_{i,j}\, h_i^*\mathbf{W}',
\label{formula:ssagg_3}
\end{align}
where $\psi$ is a learnable scoring function, $\mathbf{W}'$ is learnable, and $\lambda$ controls selective suppression.
Compared with standard softmax attention, $\exp(\lambda)$ in the denominator provides a tunable ``gate'' that can down-weight all incoming messages, enabling SSAgg to mimic node/edge selection (formalized in Section~\ref{sec:tg}).
We denote the resulting root representation as
\begin{align}
h_{\text{enhanced}}=h_r'.
\end{align}

\vpara{Training objective.}
We concatenate the basic and enhanced representations for prediction:
\begin{align}
h_{\text{final}}=[h_{\text{basic}}\|h_{\text{enhanced}}],
\end{align}
and optimize the downstream loss with orthogonality regularization:
\begin{align}
\label{formula:o_loss}
l = l_{\text{downstream}} + \gamma\, l_o,
\end{align}
where $\gamma$ balances the orthogonality constraint.

\subsection{Theoretical Guarantee}
\label{sec:tg}

In this section, we provide theoretical justification for the effectiveness of SSAgg.
Recall that explicit hyper-structure construction typically involves two expensive steps: selecting task-relevant nodes and generating new edges among them.
SSAgg circumvents both steps by selectively suppressing or propagating messages over the original graph.
We formalize this intuition through the following propositions.
\begin{proposition}
\label{propo:p1}
Given a central node $v_c$, a set of neighbor nodes $\{v_i\}_{i=1}^n$ with directed edges pointing to $v_c$, and any fixed value of $\lambda$, there exists a scoring function $\psi$ of the form defined in Formula~\ref{formula:ssagg_1} such that
\begin{align}
\forall v_k \in \{v_i\}_{i=1}^n,\ \forall \delta > 0,\ \beta_{k,c} < \delta,
\end{align}
where $\beta_{k,c}$ denotes the aggregation coefficient of node $v_k$ computed according to Formula~\ref{formula:ssagg_2}.
\end{proposition}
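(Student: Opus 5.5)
The statement has to be read with the quantifiers in the right order. A single $\psi$ cannot make $\beta_{k,c}<\delta$ for every $\delta>0$ at once, since that would force $\beta_{k,c}=0$, while each $\beta_{k,c}$ computed from Formula~\ref{formula:ssagg_2} is strictly positive. I will therefore prove the intended version: for any fixed $\lambda$ and any $\delta>0$, there is a scoring function $\psi$ of the admissible form such that $\beta_{k,c}<\delta$ for every neighbor $v_k$. The idea is that the gate term $\exp(\lambda)$ in the denominator of Formula~\ref{formula:ssagg_2} is a constant independent of $\psi$. Driving all scores $m_{i,c}$ toward $-\infty$ therefore makes every coefficient vanish.

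The key step is a direct bound. For each $k$,
\begin{align}
\beta_{k,c}=\frac{\exp(m_{k,c})}{\exp(\lambda)+\sum_{i=1}^n\exp(m_{i,c})}<\frac{\exp(m_{k,c})}{\exp(\lambda)}=\exp(m_{k,c}-\lambda),
\end{align}
because the sum in the denominator is positive. It therefore suffices to make $m_{k,c}<\lambda+\ln\delta$ for all $k$ at the same time.

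The construction comes next. Take $\psi$ to be the constant function $\psi(h_i^*,h_c^*)\equiv M$ with $M<\lambda+\ln\delta$; for instance $M=\lambda+\ln\delta-1$. Then every $m_{i,c}=M$, so $\beta_{k,c}<\exp(M-\lambda)=\delta/e<\delta$ uniformly over all neighbors. In fact the exact value is $\beta_{k,c}=\exp(M)/(\exp(\lambda)+n\exp(M))$, which tends to $0$ as $M\to-\infty$.

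The main obstacle is making sure this $\psi$ lies in the admitted family ``of the form defined in Formula~\ref{formula:ssagg_1}'', that is, the learnable scorer actually used. If $\psi$ is a parametrized map with a bias term, such as an MLP or a bilinear or attention scorer $a^\top[h_i^*\mathbf{W}_1\|h_c^*\mathbf{W}_2]+b$, I would set the weight parameters to zero and the bias to $b=M$; this realizes the constant function exactly. If the scorer has no bias, I would instead assume the node representations are bounded, $\|h^*\|\le B$, which holds for a finite input set. I would then take $\psi(x,y)=a^\top[x\|y]$ with $a=-t\,u$ for a suitable direction $u$, choosing $u$ so that $u^\top[h_i^*\|h_c^*]>0$ for the finitely many neighbors; such a $u$ exists unless some concatenated vector is zero. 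Letting $t\to\infty$ then sends all scores to $-\infty$. If that genericity condition fails, I fall back to the bias construction, which is the one I expect the paper intends.
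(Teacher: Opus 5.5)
Your proof is correct and follows the paper's route: like you, the paper's proof actually establishes the $\forall\delta\,\exists\psi$ reading, by pushing all scores $m_{i,c}$ far enough below $\lambda$ via a bounded scorer (constant, in your case) so that the gate $\exp(\lambda)$ dominates. Your bound $\beta_{k,c}<\exp(m_{k,c}-\lambda)$ is sounder than the paper's intermediate step $\beta_{k,c}\le\exp(M)/(\exp(\lambda)+n\exp(M))$, which fails in general because replacing $\sum_i\exp(m_{i,c})$ by the larger $n\exp(M)$ enlarges the denominator (take $n=2$, $\lambda=0$, $m_{1,c}=0$, $m_{2,c}\to-\infty$, giving $\beta_{1,c}\to 1/2>1/3$); the only slip in your write-up is in the bias-free aside, where nonzero vectors need not share a positive direction (that requires $0$ outside their convex hull), though $u=[0\,\|\,h_c^*]$ works whenever $h_c^*\neq 0$.
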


The proof is provided in Appendix~\ref{appendix:p1}.
Proposition~\ref{propo:p1} shows that, by introducing the hyperparameter $\lambda$, SSAgg can arbitrarily suppress the contribution of any selected neighbor node.
In other words, SSAgg can effectively block message propagation from specific nodes.
This selective suppression is functionally equivalent to node selection in explicit hyper-structure construction.

We next extend this result to show that SSAgg is expressive enough to match the root-level representations produced by a broad class of hyper-structure GNNs.
Specifically, we demonstrate that SSAgg can reproduce the representation obtained by applying any graph neural network to any given hyper-structure graph.

\begin{proposition}
\label{propo:p2}
Given the original graph $G = (\mathcal{V}, \mathcal{E})$ and an arbitrary hyper-structure graph
$G^\prime = (\mathcal{V}^\prime, \mathcal{E}^\prime)$ with $\mathcal{V}^\prime \subseteq \mathcal{V}$,
for any graph neural network $f$ used to encode $G^\prime$, there exists a set of SSAgg parameters $\theta$ such that
\begin{align}
\underset{\theta}{\mathrm{SSAgg}}(G) = h_r^\prime = f(\mathcal{V}^\prime, \mathcal{E}^\prime),
\end{align}
where $\mathrm{SSAgg}(\cdot)$ denotes the process of generating the enhanced root representation $h_r^\prime$ in Step~3.
\end{proposition}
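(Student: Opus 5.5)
We argue constructively and treat SSAgg as a universal simulator. The scoring function $\psi$ and $\mathbf{W}'$ can be chosen freely, and the gate $\lambda$ can be combined with Proposition~\ref{propo:p1}. We reduce the claim to two facts. First, SSAgg can realize any \emph{selection pattern} on the original edges: it can keep an arbitrary subset of edges of $\mathbf{A}'$ with prescribed weights and suppress the rest to arbitrarily small contribution. Second, it can route information so that the message-passing computation of $f$ on $G'=(\mathcal{V}',\mathcal{E}')$ is reproduced at the root $r$. Since $\mathcal{V}'\subseteq\mathcal{V}$ and $G$ is a rooted tree containing every node, every node of $\mathcal{V}'$ has a directed path to $r$ in $G$. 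This lets us replace the edges of $\mathcal{E}'$, which need not exist in $G$, by the paths they induce in $G$.

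\textbf{Step 1 (node selection).} For every $v\notin\mathcal{V}'$ whose contribution is not needed as a relay, we invoke Proposition~\ref{propo:p1}. This gives a $\psi$ that drives $\beta_{v,c}$ below any $\delta$, so its message vanishes in the limit. For the retained edges we choose $\psi$ to give large scores relative to $\lambda$. The retained coefficients then approach prescribed values, for instance the normalized weights that $f$ would use, or uniform weights, with $\mathbf{W}'$ absorbing any constant rescaling. Because $\psi$ is an arbitrary learnable function of $(h_i^*,h_j^*)$, and the text-encoded features from Step~1 of the implementation distinguish nodes, $\psi$ can depend on node identity. In particular, $\psi$ can separate ``keep'' edges from ``block'' edges.

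\textbf{Step 2 (edge generation by relaying).} For each hyper-edge $(u,w)\in\mathcal{E}'$ we must make $u$'s information reach the computation associated with $w$. Our plan is to encode the entire computation of $f$ into the root's aggregation. The ancestors of the nodes of $\mathcal{V}'$ act as pass-through relays whose kept edges carry, ideally losslessly, the multiset of features $\{h_v^*: v\in\mathcal{V}'\}$ up to $r$. This requires the embedding dimension to be large enough to concatenate disjoint slots, one per node. We realize this with $\mathbf{W}'$ and per-node positional coordinates, which again exploits that $\psi$ and the features identify nodes. Once $r$ holds all features of $\mathcal{V}'$, possibly approximately, together with the known adjacency $\mathcal{E}'$ encoded in the parameters, the remaining map to $f(\mathcal{V}',\mathcal{E}')$ is a fixed function. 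We absorb it into the readout, or into $\psi$ and $\mathbf{W}'$ via a universal-approximation argument. We then read equality in the statement as exact equality in the limiting regime $\delta\to 0$ and as approximation to arbitrary precision in general.

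\textbf{Main obstacle.} The hard step is Step~2. As written, Formula~\ref{formula:ssagg_3} is a single linear aggregation, $h_j'=\sum_i\beta_{i,j}h_i^*\mathbf{W}'$, so a genuinely multi-layer nonlinear $f$ cannot be reproduced by one layer without extra assumptions. We would first try to make those assumptions explicit: SSAgg is stacked or applied recursively bottom-up in the same way as Step~2, $\psi$ is universal, and the width is sufficient. Under these assumptions we would argue by induction on the depth of $f$ and on the tree height of $G$ that each layer of $f$ is emulated by one round of selective relaying. If this fails, we would weaken the claim to matching $f$ up to arbitrary $\epsilon$ and to a final learnable readout.
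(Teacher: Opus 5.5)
The gap is your Step~2, and making the hypotheses explicit does not close it. In SSAgg, information moves only along the directed edges of $\mathbf{A}'$, i.e.\ toward the root. Read literally, Formula~\ref{formula:ssagg_3} is even a single hop built from the Step-2 vectors $h_i^*$, not from the $h_i'$. A hyper-edge $(u,w)\in\mathcal{E}'$ with $w$ not an ancestor of $u$ therefore has no directed path in $G$ that could stand in for it, so the entire computation of $f$ would have to happen at $r$. There, however, $h_r'=(\sum_i\beta_{i,r}h_i^*)\mathbf{W}'$, with every $\beta_{i,r}>0$, $\sum_i\beta_{i,r}<1$, and a single shared linear map $\mathbf{W}'$; $\psi$ enters only through these scalar weights. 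A shared $\mathbf{W}'$ cannot put different nodes' features into disjoint slots, because every $h_i^*$ passes through the same map and the $h_i^*$ are fixed by Step~2, not by $\theta$. A universal $\psi$ only moves $h_r'$ inside the convex hull of $0$ and the vectors $h_i^*\mathbf{W}'$. The ``remaining fixed function'' you want to absorb, by contrast, is an arbitrary vector-valued map of the features of $\mathcal{V}'$ whose values need not lie in that hull. Pushing it into a readout is not allowed either, because the statement identifies $h_r'$ itself with $f(\mathcal{V}',\mathcal{E}')$. Your fallback (stacked rounds, an extra readout, $\epsilon$-approximation, equality only as $\delta\to0$) is a different proposition, so as written the proposal does not prove this one.

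The paper never attempts your Step~2. It takes $f$ to be a single-layer GAT followed by mean pooling, so that $h^{\mathrm{HS}}_{G'}=\sum_k\lambda_k h_k^*\mathbf{W}_{\mathrm{HS}}$ has the same algebraic form as the SSAgg output $h_r'=\sum_k\gamma_k h_k^*\mathbf{W}'$. It then simply matches coefficients, $\gamma_k=\lambda_k$ and $\mathbf{W}'=\mathbf{W}_{\mathrm{HS}}$. That coefficient matching is exactly what your Step~1 (selective attention with prescribed weights) is designed to supply. The idea you are missing relative to the paper is the reduction to an encoder that is itself a weighted sum of the $h_k^*$ followed by one linear map; after that, no relaying or universal approximation is needed. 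Be aware that the paper's argument covers only this class of $f$ and takes the realizability of arbitrary $\gamma_k$ for granted. Your observation that a general multi-layer nonlinear $f$ cannot be reproduced by one linear aggregation is therefore a fair criticism of the statement's ``any graph neural network''---but it is a reason to narrow the claim, not a proof of it.
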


The detailed proof is provided in Appendix~\ref{appendix:p2}.
Together, Propositions~\ref{propo:p1} and~\ref{propo:p2} establish that SSAgg can implicitly realize both node selection and structure construction.
Consequently, SSAgg is theoretically guaranteed to match the representational power of explicitly constructed hyper-structures, while operating directly on the original graph with substantially lower computational overhead.

%% file: 05_experiment.tex
\section{Experiment}
In this section, we systematically evaluate the effectiveness of UniSAGE across multiple tasks.
Experiments are conducted on both public benchmarks and a real-world industrial dataset.
We also perform comprehensive ablation studies to analyze the contribution of each component in UniSAGE.

\subsection{Overall Setup}
All raw data are stored in JSON format.
We compare UniSAGE with four categories of baseline methods:
\begin{itemize} 
\item Traditional Feature Extraction Models: we include traditional models \textit{MLP}~\cite{hornik1991approximation} and \textit{LightGBM}~\cite{ke2017lightgbm} as baselines. These models directly filter and extract features from the input to meet the requirements of downstream tasks. 
\item Static Graph Neural Networks: we also compare UniSAGE with widely-used static GNN models including \textit{GCN}~\cite{kipf2016semi}, \textit{GraphSAGE}~\cite{hamilton2017inductive} and \textit{GAT}~\cite{velivckovic2017graph}. These methods operate on a fixed graph structure. 
\item Dynamic Graph Neural Networks: to evaluate performance in evolving scenarios, we include dynamic GNNs such as \textit{TGN}~\cite{rossi2020temporal}, \textit{TGAT}~\cite{xu2020inductive}, \textit{DyGFormer}~\cite{yu2023towards} and \textit{FreeDyG}~\cite{tian2024freedyg}. These models capture temporal dependencies and structural changes over time. 
\item Relational Deep Learning: we also consider \textit{RDL}~\cite{fey2023relational} and \textit{RelGNN}~\cite{chen2025relgnn} as baselines, which are designed to model complex structured relationships in data. 
\end{itemize}
Detailed descriptions of all baselines are provided in Appendix~\ref{sec:appendix_baselines}.
For traditional feature extraction models, we use a language model to encode JSON text into representations, followed by task-specific predictors.
For static GNN baselines, we construct a graph following Section~\ref{sec:pi} (Step~1), obtain node features via a language model, and apply GNNs on the combined adjacency matrix $\mathbf{A}=\mathbf{A}_s+\mathbf{A}_d$.
Dynamic GNNs follow a similar construction, treating dynamic attributes as time-evolving nodes.
For RDL-based models, we strictly follow their original graph construction and learning procedures.
For UniSAGE, we adopt GRU~\cite{chung2014empirical} as the temporal encoder in Formula~\ref{formula:di}.
Hyperparameters and implementation details are reported in Appendix~\ref{appendix:hys}.

\subsection{Results on RelBench}

\vpara{Dataset Introduction.}
RelBench~\cite{robinson2024relbench} is a public benchmark for learning on relational databases, covering domains such as e-commerce, social networks, and healthcare, with dataset sizes ranging from 74K to 41M entities.
We convert all tabular data into JSON format; details are provided in Appendix~\ref{sec:appendix_datasets}.
Due to the large scale of certain datasets, some baselines cannot be evaluated on full data.
Following prior practice, we uniformly resample 2,000/500/500 instances for training/validation/testing across all tasks.
Full-dataset results of UniSAGE are reported in Appendix~\ref{appendix:complete_results}.

\begin{table*}[tbp]
\centering
    \caption{Entity classification results (AUC-ROC \% $\uparrow$, higher is better) on RelBench datasets. The optimal results are \textbf{bolded}, and the sub-optimal results are \underline{underlined}.
    }
    \label{tab:entity_classification}
    \resizebox{0.9\textwidth}{!}{
    \begin{tabular}{c|cc|cc|cc|cc|c|cc|c}
    \toprule
    Dataset & \multicolumn{2}{c|}{rel-amazon} & \multicolumn{2}{c|}{rel-avito} & \multicolumn{2}{c|}{rel-event} & \multicolumn{2}{c|}{rel-f1} & rel-hm & \multicolumn{2}{c|}{rel-stack} & rel-trial\\
    \midrule
    Task & \makecell{user-\\churn} & \makecell{item-\\churn} & \makecell{user-\\visits} & \makecell{user-\\clicks} & \makecell{user-\\repeat} & \makecell{user-\\ignore} & \makecell{driver-\\dnf} & \makecell{driver-\\top3} & \makecell{user-\\churn} & \makecell{user-\\engage} & \makecell{user-\\badge} & \makecell{study-\\outcome} \\
    \midrule
    MLP& 50.24 & 50.79 & 54.92 & 50.64	& 66.26	& 72.83	& 66.30	& 64.53	& 50.05	& 57.61	& 66.72	& 52.78  \\
    
    LightGBM & 50.36	& 50.84	& 57.74	& 51.34	& 67.05	& 77.58	& 70.35	& 70.09	& 50.66	& 66.69	& 67.01	& 53.87\\

    \midrule
    
    GCN	& 51.17	& 70.71	& 60.59	& 57.53	& 67.96	& 78.28	& 74.51	& 74.76	& 51.83	& 73.48	& 73.34	& 55.45 \\

    GraphSAGE & 51.96	& 70.81	& 61.17	& 56.39	& 67.09	& 78.26	& 71.82	& 74.99	& 53.57	& 70.32	& 74.92	& 54.73 \\
    
    GAT	& 52.14	& 73.42	& 61.54	& 62.41	& 67.42	& 80.77	& 75.39	& \underline{76.11}	& 57.56	& 79.31	& 79.61	& \underline{56.12}\\

    \midrule
    
    TGN	& 51.68	& 65.10	& 58.29	& 52.65	& 67.45	& 80.48	& 70.70	& 70.25	& \underline{69.71}	& 71.80	& 80.45	& 55.46\\
    
    TGAT & 56.19	& 56.70	& 62.51	& 52.95	& 67.30	& 79.32	& 73.24	& 73.80	& 65.65	& 76.32	& \underline{88.34}	& 54.16\\
    
    DyGFormer & 52.11	& 57.12	& 58.69	& 59.77	& 67.66	& 77.70	& \underline{75.71}	& 72.37	& 65.15	& 80.08	& 86.81	& 56.00\\
    
    FreeDyG & 51.54 & 54.52	& 65.34	& 65.75	& 68.87	& 80.52	& 71.51	& 70.61	& 65.22	& 80.13	& 83.20	& 54.48\\

    \midrule
    
    RDL &	\underline{58.06}	& \underline{81.43}	& \underline{68.27}	& \underline{71.09}	& \underline{69.91}	& 80.37	& 70.35	& 73.03	& 68.63	& \underline{86.52}	& 84.16	& 53.34\\

    RelGNN & 55.70 & 80.80 & 65.80 & 68.57 & 68.01 & \underline{81.37} & 70.90 & 73.60 & 66.26 & 82.19 & 84.50 & 51.27\\

    \midrule
    
    UniSAGE & \textbf{59.24}	& \textbf{81.69}	& \textbf{70.60}	& \textbf{71.88}	& \textbf{70.38}	& \textbf{87.05}	& \textbf{82.57}	& \textbf{82.20}	& \textbf{71.40}	& \textbf{88.92}	& \textbf{90.21}	& \textbf{57.17}\\


    
    \bottomrule
    \end{tabular}
    }
\end{table*}

\vpara{Entity Classification Results.}
Table~\ref{tab:entity_classification} reports the AUC-ROC results on all entity classification tasks.
We summarize the key observations as follows.

\vpait{1. UniSAGE consistently outperforms all baselines.}
UniSAGE achieves the best results on all evaluated tasks.
On average, it improves over traditional feature-based models by 15.93\% and over GNN-based methods by 9.22\%.
In several tasks, UniSAGE exceeds the second-best method by nearly 10\%, demonstrating strong and stable performance across diverse datasets.

\vpait{2. UniSAGE robustly integrates structural and temporal information.}
While static GNNs outperform traditional baselines by 6.71\% on average, their performance varies significantly across tasks.
Dynamic GNNs introduce temporal signals but yield only marginal gains over static counterparts (0.52\% on average), and even underperform in some cases.
In contrast, UniSAGE outperforms dynamic GNNs by 8.95\% on average, indicating its superior ability to jointly model heterogeneous structures and asynchronous temporal behaviors.

\vpait{3. UniSAGE generalizes beyond schema-specific relational models.}
Compared with RDL and RelGNN, which are tailored to RelBench schemas, UniSAGE achieves an average improvement of nearly 5\%.
For certain tasks, such as rel-f1 driver-dnf, the gain exceeds 10\%, highlighting UniSAGE’s strong generalization capability on complex relational data.

\vpara{Entity Regression Results.}
Results on entity regression tasks are reported in Appendix~\ref{appendix:entity_regression}.
UniSAGE consistently achieves the best performance across all datasets, outperforming traditional models, static GNNs, dynamic GNNs, and RDL methods by 22.34\%, 14.05\%, 10.87\%, and 8.28\% on average, respectively.
These results further confirm the robustness of UniSAGE across different learning objectives.

\subsection{Results on Real-World Data}

\vpara{Dataset Introduction.}
We use a large-scale real-world financial behavior dataset, referred to as \textbf{UserBehavior}, which contains 191,927 user records with both demographic attributes (\emph{e.g.}, gender, education, employment status) and temporal behavioral logs (\emph{e.g.}, loan applications, credit account transactions).
The dataset is stored in JSON format, where each user profile consists of a set of static features along with a sequence of dynamic events.
The downstream task is formulated as a binary anomaly-detection problem, aiming to predict whether a user poses a potential credit risk.
We use 115,156 records (60\%) for training, 30,097 records (15\%) for validation, and 46,674 records (25\%) for testing.
All baselines follow the same preprocessing protocol as in RelBench, and we use the same text encoder~\cite{reimers-2019-sentence-bert} for feature construction.
To ensure data privacy and regulatory compliance, the dataset is rigorously anonymized and all sensitive information is removed.
Although the dataset is not publicly available at present, a public release is under consideration.
Detailed dataset statistics are provided in Appendix~\ref{appendix:ub}.

\begin{table}[h]
    \centering
    \caption{The experimental results (AUC-ROC \% $\uparrow$, higher is better) on the real-world industrial dataset UserBehavior.}
    \label{tab:user_behavior}
    \resizebox{\columnwidth}{!}{
    \begin{tabular}{c|ccccc}
    \toprule
    Model & MLP & LightGBM & GCN & GraphSAGE & GAT\\
    \midrule
    Result & 50.10 & 52.12 & 52.64 & 52.34 & 53.80 \\
    \midrule
    \midrule
    Model & TGN & DyGFormer & TGAT & FreeDyG & UniSAGE\\
    \midrule
    Result & 53.91 & 54.61 & 54.03 & 54.12 & \textbf{59.06}\\
    \bottomrule
    \end{tabular}
    }
\end{table}

\vpara{Experimental Results.}
Table~\ref{tab:user_behavior} reports the performance of all baselines on UserBehavior.
The results indicate that this task is challenging: simple text encoding followed by an MLP fails to produce a competitive model.
Incorporating graph structure yields measurable gains, but the improvements remain modest, suggesting that existing static GNNs struggle to fully exploit the underlying hierarchical organization.
Dynamic GNNs that model temporal semantics provide additional benefits, yet the overall performance is still limited, indicating that effectively leveraging asynchronous behavioral logs remains difficult.
In contrast, UniSAGE achieves a substantial performance boost.
Compared to the best competing baseline (DyGFormer), UniSAGE delivers a relative improvement of nearly 10\%.
These results underscore the advantage of UniSAGE’s unified modeling of static and dynamic attributes, demonstrating its suitability for complex real-world industrial data and its ability to extract task-relevant signals in challenging scenarios.

\subsection{Ablation Study}
We conduct ablation studies to quantify the contribution of each component in UniSAGE.
Specifically, we evaluate on the rel-f1 driver-dnf task and the UserBehavior dataset, and compare the following variants:

\vpait{1. UniSAGE without unified representation and without SSAgg} (UniSAGE w/o UR and w/o SSAgg):
This variant uses only Step~1 in Section~\ref{sec:pi} to construct the graph and perform bottom-up propagation from leaf nodes to the root.
It removes unified representation learning in Step~2 and does not apply SSAgg for representation enhancement.

\vpait{2. UniSAGE without orthogonality loss and without SSAgg} (UniSAGE w/o OL and w/o SSAgg):
Based on Variant~1, this model includes the parameter subspaces $\mathbf{W}_s$ and $\mathbf{W}_d$ in Step~2 but removes the orthogonality loss in Formula~\ref{formula:lo}, i.e., no constraint is imposed to enforce orthogonality between the two spaces.
SSAgg is also excluded.

\vpait{3. UniSAGE without SSAgg} (UniSAGE w/o SSAgg):
This variant includes both Step~1 and Step~2 but removes SSAgg, i.e., it does not perform representation enhancement via hyper-structure reasoning.

\vpait{4. UniSAGE}:
The full model that includes all components described in our framework.

\begin{figure}[h]
\centering
\includegraphics[width=\columnwidth]{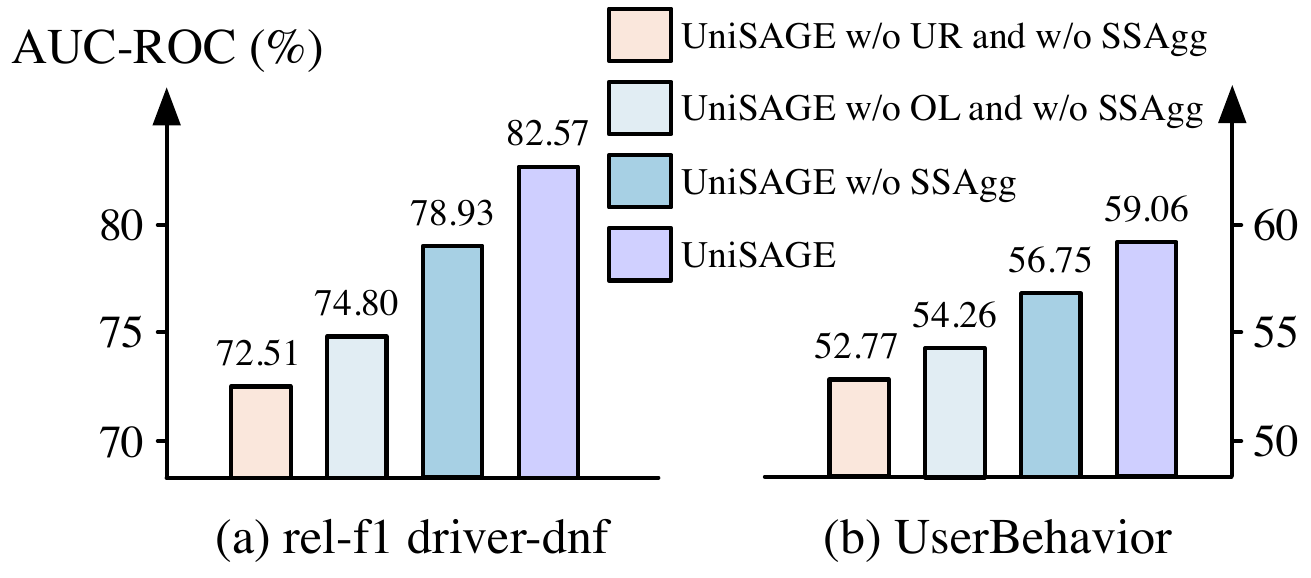}
\caption{The ablation results for UniSAGE.}
\label{fig:ablation_study}
\end{figure}

Figure~\ref{fig:ablation_study} presents the ablation results.
Using only the graph structure yields clearly suboptimal performance (UniSAGE w/o UR and w/o SSAgg), indicating that naive propagation is insufficient.
Introducing separate static aggregation and dynamic inference improves performance (UniSAGE w/o OL and w/o SSAgg), highlighting the importance of modeling both attribute types.
However, without the orthogonality loss, the static and dynamic representation spaces may drift during training, which limits the effectiveness of this unified modeling.
Adding the orthogonality constraint stabilizes the shared semantic space and further improves performance (UniSAGE w/o SSAgg).
Finally, incorporating SSAgg enables task-relevant hyper-structure reasoning and strengthens cross-type interactions, leading to the best overall results (UniSAGE).
Overall, these results confirm the necessity of each component and align well with our theoretical motivation and empirical design choices.

\subsection{Additional Results and Analysis}
\label{sec:appendix_navigation}

To improve readability and focus in the main paper, we defer several complementary experimental results and analyses to the appendix.
Specifically, Appendix~\ref{sec:appendix_task_stats} provides detailed statistics of all datasets and tasks used in RelBench.
Complete results on the full (non-resampled) RelBench datasets are reported in Appendix~\ref{appendix:complete_results}.
Appendix~\ref{appendix:time} presents a detailed empirical analysis of the runtime efficiency of UniSAGE, including comparisons across different datasets and task settings.
Appendix~\ref{appendix:hyperparameter} provides an in-depth hyperparameter sensitivity study, analyzing the impact of key hyperparameters and demonstrating the robustness of UniSAGE under reasonable configurations.

%% file: 06_conclusion.tex
\section{Conclusion}

In this paper, we propose UniSAGE, a unified framework for modeling complex data with both static and dynamic attributes.
UniSAGE addresses key limitations of existing approaches by constructing a global attribute graph, learning unified representations through orthogonal parameter subspaces, and leveraging hyper-structures to enhance downstream tasks.
Extensive experiments on public benchmarks and a real-world industrial dataset demonstrate that UniSAGE consistently outperforms strong baselines, achieving improvements of up to 10\% on challenging tasks.
By combining automation, strong generalizability, and effective modeling of implicit static–dynamic correlations, UniSAGE provides a practical and robust solution for real-world data-intensive applications.

\section{Acknowledgments}
This work was supported by National Natural Science Foundation of China (No. 62322606, No. 62441605), and the Fundamental Research Funds for the Central Universities.

%% file: 07_appendix.tex
\appendix

\section{Theoretical Derivation}

\subsection{Proof of Proposition \ref{propo:p1}}
\label{appendix:p1}

\textbf{Proposition 4.1.}\textit{
Given a central node $v_c$, a set of neighbor nodes $\{v_i\}_{i=1}^n$ with directed edges pointing to $v_c$, and any fixed value of $\lambda$, there exists a scoring function $\psi$ of the form defined in Formula~\ref{formula:ssagg_1} such that
\begin{align}
\forall v_k \in \{v_i\}_{i=1}^n,\ \forall \delta > 0,\ \beta_{k,c} < \delta,
\end{align}
where $\beta_{k,c}$ denotes the aggregation coefficient of node $v_k$ computed according to Formula~\ref{formula:ssagg_2}.
}

\begin{proof}
Let $M = \max_{i=1,\dots,n} m_{i,c}$ be the largest attention score among all neighbors. 
Then for any node $v_k \in \{v_i\}_{i=1}^n$, we have:
\begin{align}
\beta_{k,c} = \frac{\exp(m_{k,c})}{\exp(\lambda) + \sum_{i=1}^{n} \exp(m_{i,c})}
\le \frac{\exp(M)}{\exp(\lambda) + n \cdot \exp(M)}
\end{align}
Then we define:
\begin{align}
\beta_{\max} = \frac{\exp(M)}{\exp(\lambda) + n \cdot \exp(M)} = \frac{1}{\exp(\lambda - M) + n}
\end{align}
To guarantee $\beta_{k,c} < \delta$, it suffices that:
\begin{align}
\frac{1}{\exp(\lambda - M) + n} < \delta
\end{align}
Rearranging gives:
\begin{align}
\exp(\lambda - M) > \frac{1}{\delta} - n.
\end{align}
Since $\delta > 0$, the right-hand side is finite as long as $\delta < \frac{1}{n}$.
Thus, we can always choose $M$ sufficiently smaller than $\lambda$ such that:
\begin{align}
M < \lambda - \log\left(\frac{1}{\delta} - n\right)
\end{align}
This implies that if we define $\psi(\cdot)$ such that all $m_{i,c} \le M$, then $\beta_{i,c} < \delta$ for all $i$.
Such a function $\psi$ trivially exists (\emph{e.g.}, a bounded scoring function with controlled output range).
At this point, we have proven that for any given $\delta > 0$, we can always choose a scoring function $\psi(\cdot)$ with bounded maximum output to ensure $\beta_{k,c} < \delta$ for all $k$.
\end{proof}

\subsection{Proof of Proposition \ref{propo:p2}}
\label{appendix:p2}

\textbf{Proposition 4.2.}\textit{
Given the original graph $G = (\mathcal{V}, \mathcal{E})$ and an arbitrary hyper-structure graph
$G^\prime = (\mathcal{V}^\prime, \mathcal{E}^\prime)$ with $\mathcal{V}^\prime \subseteq \mathcal{V}$,
for any graph neural network $f$ used to encode $G^\prime$, there exists a set of SSAgg parameters $\theta$ such that
\begin{align}
\underset{\theta}{\mathrm{SSAgg}}(G) = h_r^\prime = f(\mathcal{V}^\prime, \mathcal{E}^\prime),
\end{align}
where $\mathrm{SSAgg}(\cdot)$ denotes the process of generating the enhanced root representation $h_r^\prime$ in Step~3.
}

\begin{proof}
For analytical simplicity, we initially assume the hyper-structure graph $G^{\prime} = (\mathcal{V}^{\prime}, \mathcal{E}^{\prime})$ is encoded using a single-layer GAT model followed by a mean pooling operation over all its nodes. 
The GAT representation for each node $j \in \mathcal{V}^{\prime}$ can be expressed as:
\begin{align}
h_j^{\text{HS}} =  \sum_{k \in \mathcal{N}(j)} \alpha_{j,k} h_k^* \cdot  \mathbf{W}_{\text{HS}}
\end{align}
where $h_k^*$ is the initial representation of node $v_k$, $\alpha_{j,k}$ is the normalized attention coefficient for neighbor $v_k$ of node $v_j$, and $\mathbf{W}_{\text{HS}}$ is a learnable linear transformation matrix.
Then, the final graph representation is computed as:
\begin{align}
h_{G^\prime}^{\text{HS}} = \frac{1}{|\mathcal{V}^{\prime}|} \sum_{j \in \mathcal{V}^{\prime}} h_j^{\text{HS}}.
\end{align}
Each $h_j^{\text{HS}}$ is a weighted sum over $\{ h_k^* \}_{k \in \mathcal{V}}$, so $h_{G^\prime}^{\text{HS}}$ itself is also a weighted sum of $\{ h_k^* \}_{k \in \mathcal{V}}$, which can be expressed as:
\begin{align}
h_{G^\prime}^{\text{HS}} = \sum_{k \in \mathcal{V}} \lambda_k h_k^*,
\end{align}
where the weights $\{ \lambda_k \}$ are depend on the hyper-structure and parameters of the GAT and pooling operation.
Now, recall that in the SSAgg mechanism, the information of the entire graph is aggregated layer by layer to the root node, and the root representation can be obtained by:
\begin{align}
h_r^\prime = \sum_{k \in \mathcal{V}} \gamma_k h_k^*\cdot\mathbf{W}^\prime,
\end{align}
where $\{ \gamma_k \}$ are aggregation weights learned through bottom-up semantic aggregation via selective paths.
Since both $h_r^\prime$ and $h_{G^\prime}^{\text{HS}}$ are linear combinations of $\{ h_k^* \}$, there always exists a set of SSAgg parameters $\theta$ such that:
\begin{align}
\gamma_k = \lambda_k \quad \text{for all } k \in \mathcal{V}
\end{align}
Therefore, by setting the SSAgg weights $\gamma_k$ to match $\lambda_k$ and making $\mathbf{W}_{\text{HS}}=\mathbf{W}^\prime$, we ensure that:
\begin{align}
h_r^\prime = \sum_{k \in \mathcal{V}} \gamma_k h_k^*\cdot\mathbf{W}^\prime = \sum_{k \in \mathcal{V}} \lambda_k h_k^*\cdot\mathbf{W}_{\text{HS}} = h_{G^\prime}^{\text{HS}}
\end{align}
We have found a set of parameters $\theta$ for SSAgg that satisfies:
\begin{align}
\underset{\theta}{\text{\rm SSAgg}}(G) = f(G^\prime)
\end{align}
Thus, Proposition \ref{propo:p2} is proved.
\end{proof}

\section{Experiment Details}
\label{sec:appendix_experiment}

\subsection{Hyper-Parameter Settings and Obtained Results}
\label{appendix:hys}

For all models, we apply the dropout technique with dropout rates selected from $[0, 0.1, 0.3, 0.5, 0.8]$.
Additionally, we utilize the Adam optimizer, choosing learning rates from $[10^{-3}, 5 \times 10^{-3}, 10^{-2}]$ and weight decay values from $[0, 5 \times 10^{-4}]$.
Regarding model architecture, UniSAGE consistently maintains a one-time, layer-wise aggregation from leaf nodes to the root node, while the number of GNN/MLP layers is selected from $[1, 2, 3, 4, 5]$, the hidden dimension from $[8, 16, 32, 64, 128, 256]$, and the number of heads for attentive models from $[1, 2, 4, 8]$.
For the SSAgg component in UniSAGE, the hyperparameter $\lambda$ is selected from $[0, 1, 2]$, and the hyperparameter $\gamma$ in Formula \ref{formula:o_loss} is selected from $[0.01, 0.1, 0.5]$.
For the unique hyperparameters of other baselines, we follow the hyperparameter search space provided in their original papers for tuning.
We conduct five rounds of experiments with different random seeds for each setting and report the average results.

\subsection{Baseline Model Descriptions}
\label{sec:appendix_baselines}
This section offers a concise overview of the baseline models used in our experiments, categorized according to their main approaches to feature processing and graph representation.

\vpait{Traditional Feature Extraction Models}

\vpara{MLP}~\cite{hornik1991approximation}.
MLP is a basic feed-forward neural network composed of multiple fully connected layers with nonlinear activations. It excels at learning complex combinations of input features and is suitable for tabular data or flattened feature vectors, but it has limited ability to capture underlying structural or temporal relationships.

\vpara{LightGBM}~\cite{ke2017lightgbm}.
LightGBM is an efficient implementation of Gradient Boosting Decision Trees (GBDT), known for fast training and low memory consumption. It uses histogram-based algorithms and a leaf-wise tree growth strategy, making it highly effective for large-scale tabular datasets and high-dimensional features.

\vpait{Static Graph Neural Networks}

\vpara{GCN}~\cite{kipf2016semi}.
GCN is a foundational graph neural network that updates node representations by aggregating features from their immediate neighbors. It provides an efficient approach for semi-supervised learning on graphs but is mainly designed for static graph structures without temporal changes.

\vpara{GraphSAGE}~\cite{hamilton2017inductive}.
GraphSAGE is an inductive GNN framework that learns aggregator functions to generate embeddings for unseen nodes or new graphs. Instead of learning separate embeddings per node, it learns how to sample and aggregate neighborhood features, enabling strong generalization to unseen data.

\vpara{GAT}~\cite{velivckovic2017graph}.
GAT integrates masked self-attention mechanisms to assign different importance weights to neighbors during feature aggregation. This allows the model to focus on more relevant nodes for a task without requiring full graph knowledge, achieving state-of-the-art results on many benchmarks.

\vpait{Dynamic Graph Neural Networks}

\vpara{TGN}~\cite{rossi2020temporal}.
TGN is a general framework for deep learning on dynamic graphs that introduces a memory module to store each node’s evolving state over time. When new interactions happen, the model updates the memory of the involved nodes, enabling effective capture of temporal patterns and generation of time-aware node embeddings.

\vpara{TGAT}~\cite{xu2020inductive}.
TGAT is an inductive model for dynamic graphs that enhances the graph attention mechanism with temporal encoding. It encodes the timestamps of node interactions and integrates them into the self-attention computation, allowing it to effectively capture temporal dependencies in the data.

\vpara{DyGFormer}~\cite{yu2023towards}.
DyGFormer leverages the Transformer architecture to model dynamic graphs by treating temporal interactions as sequences. This approach enables it to capture long-range temporal dependencies alongside higher-order structural information effectively.

\vpara{FreeDyG}~\cite{tian2024freedyg}.
FreeDyG is a novel approach that analyzes dynamic graph signals in the frequency domain. By learning temporal evolution patterns from this perspective, the model effectively captures underlying periodicities and trends while filtering out noise, resulting in improved predictive performance.

\vpait{Relational Deep Learning}
RelGNN is a relational graph neural network tailored for relational databases, which explicitly exploits primary foreign key structures. By introducing atomic routes and composite message passing, RelGNN enables direct single-hop information exchange across many-to-many relations, reducing redundant aggregation and highlighting key predictive signals, leading to more efficient and accurate relational deep learning.

\vpara{RDL}~\cite{fey2023relational}.
RDL is the official baseline framework introduced with the RelBench benchmark, specifically designed for representation learning on relational databases. It offers a standardized approach to automatically convert relational tables into heterogeneous graphs and employs a custom relational graph neural network (RGNN) to learn entity representations for predictive tasks within the database.

\vpara{RelGNN}~\cite{chen2025relgnn}.
RelGNN is a relational graph neural network tailored for relational databases, which explicitly exploits primary--foreign key structures. By introducing atomic routes and composite message passing, RelGNN enables direct single-hop information exchange across many-to-many relations, reducing redundant aggregation and highlighting key predictive signals, leading to more efficient and accurate relational deep learning.

\begin{table*}[tbp]
\centering
\caption{Statistics of the predictive tasks used for evaluation, with data extracted from the original RelBench dataset splits (as referenced in Table 1 of the main text). \#Unique Entities refers to the number of unique root entities in the task.}
\label{tab:task_statistics_final}
\begin{tabular}{lllrrrrc}
\toprule
\multirow{2}{*}{\textbf{Dataset}} & \multirow{2}{*}{\textbf{Task name}} & \multirow{2}{*}{\textbf{Task type}} & \multicolumn{3}{c}{\textbf{\#Rows of training table}} & \textbf{\#Unique} & \textbf{\%train/test} \\
\cmidrule(lr){4-6}
& & & \textbf{Train} & \textbf{Validation} & \textbf{Test} & \textbf{Entities} & \textbf{Entity Overlap} \\
\midrule
\multirow{4}{*}{rel-amazon} 
& user-churn & entity-cls & 4,732,555 & 409,792 & 351,885 & 1,585,983 & 88.0 \\
& item-churn & entity-cls & 2,559,264 & 177,689 & 166,842 & 416,352 & 93.1 \\
& user-ltv & entity-reg & 4,732,555 & 409,792 & 351,885 & 1,585,983 & 88.0 \\
& item-ltv & entity-reg & 2,707,679 & 166,978 & 178,334 & 427,537 & 93.5 \\
\midrule
\multirow{3}{*}{rel-avito} 
& ad-ctr & entity-reg & 5,100 & 1,766 & 1,816 & 4,997 & 59.8 \\
& user-clicks & entity-cls & 59,454 & 21,183 & 47,996 & 66,449 & 45.3 \\
& user-visits & entity-cls & 86,619 & 29,979 & 36,129 & 63,405 & 64.6 \\
\midrule
\multirow{3}{*}{rel-event} 
& user-attendance & entity-reg & 19,261 & 2,014 & 2,006 & 9,694 & 14.6 \\
& user-repeat & entity-cls & 3,842 & 268 & 246 & 1,514 & 11.5 \\
& user-ignore & entity-cls & 19,239 & 4,185 & 4,010 & 9,799 & 21.1 \\
\midrule
\multirow{3}{*}{rel-f1} 
& driver-dnf & entity-cls & 11,411 & 566 & 702 & 821 & 50.0 \\
& driver-top3 & entity-cls & 1,353 & 588 & 726 & 134 & 50.0 \\
& driver-position & entity-reg & 7,453 & 499 & 760 & 826 & 44.6 \\
\midrule
\multirow{2}{*}{rel-hm} 
& user-churn & entity-cls & 3,871,410 & 76,556 & 74,575 & 1,002,984 & 89.7 \\
& item-sales & entity-reg & 5,488,184 & 105,542 & 105,542 & 1,005,542 & 100.0 \\
\midrule
\multirow{3}{*}{rel-stack} 
& user-engagement & entity-cls & 1,360,850 & 85,838 & 88,137 & 88,137 & 97.4 \\
& user-badge & entity-cls & 3,386,276 & 247,398 & 255,360 & 255,360 & 96.9 \\
& post-votes & entity-reg & 2,453,921 & 156,216 & 160,903 & 160,903 & 97.1 \\
\midrule
\multirow{2}{*}{rel-trial} 
& study-outcome & entity-cls & 11,994 & 960 & 825 & 13,779 & 0.0 \\
& study-adverse & entity-reg & 43,335 & 3,596 & 3,098 & 50,029 & 0.0 \\
\bottomrule
\end{tabular}
\end{table*}

\subsection{Dataset Description and Graph Construction}
\label{sec:appendix_datasets}
In RelBench~\cite{robinson2024relbench}, data is stored as text across multiple tables, enabling researchers to process the tabular data flexibly. 
As one of the most challenging benchmarks to date, RelBench includes a wide range of static attributes as well as numerous dynamic records, such as purchase histories and review logs. 
It supports tasks like entity classification (\emph{e.g.}, predicting customer churn in rel-hm) and entity regression (\emph{e.g.}, forecasting product revenue in rel-amazon).
For our experiments, we first transformed the tabular data into JSON-formatted text for each entity, tailored to the dataset characteristics and downstream tasks. 
We then employed the same language model~\cite{reimers-2019-sentence-bert} used in the original RelBench paper to encode node features from this textual data. 
As noted earlier, traditional feature extraction models use the encoded JSON text directly, while other models integrate graph structures.

This section provides a detailed description of each RelBench dataset used in our experiments.
For each dataset, we summarize its domain, key tables, and the specific graph construction strategy applied by our framework to generate hierarchical JSON representations for the predictive tasks. 
This transformation is a crucial step, enabling the relational data to be effectively processed by graph neural networks.

\vpait{rel-amazon: E-commerce}

\vpara{Dataset Description.}
The rel-amazon dataset models an e-commerce environment focused on book products from the Amazon platform. It includes interconnected tables for customers, products, and reviews. The product table contains numerous static attributes such as price, brand, and category. In contrast, the review table records dynamic, time-stamped interactions between users and products, including ratings and review texts.

\vpara{Graph Construction Strategy.}
Our framework generates two types of graphs from this dataset, corresponding to the two root entities: customers and products. For customer-centric tasks (such as user-churn and user-ltv), the root node represents a customer and is connected to their associated reviews. Each review is treated as a dynamic entity with its own timestamp (review\_time) and links to the corresponding product node, which contains the product’s static attributes. For product-centric tasks (such as item-churn and item-ltv), the root node represents a product and is connected to all its reviews. Each review then links back to the customer who authored it.

\vpait{rel-stack: Q\&A Platform}

\vpara{Dataset Description.}
This dataset originates from the Stack Exchange network, focusing specifically on the statistics Q\&A site (stats-exchange). It models a community-driven platform that includes users, posts (questions and answers), comments, votes, and badges. The data captures both the content and the complex social and reputational interactions among community members.

\vpara{Graph Construction Strategy.}
For user-centric tasks (user-badge, user-engagement), the root user node links to multiple lists representing their activities, including posts created, comments written, votes cast, and badges earned. Each item in these lists is a detailed, time-stamped entity. For the post-votes task, the root post node connects to its author (user) and to lists of related comments, votes, and postLinks, effectively modeling the lifecycle of the content.

\vpait{rel-trial: Clinical Trials}

\vpara{Dataset Description.}
The rel-trial dataset is a comprehensive compilation of clinical trial information sourced from ClinicalTrials.gov. It encompasses detailed data on study protocols, participating sites (facilities), sponsors, medical conditions under investigation, interventions, and reported outcomes, including adverse events.

\vpara{Graph Construction Strategy.}
For study-centric tasks (study-outcome, study-adverse), the root node is the study itself. It connects to static attributes such as design and eligibility criteria, as well as to lists of related entities including sponsors, facilities, conditions, and interventions. The outcomes list is especially complex, with each outcome treated as a dynamic entity that further contains a nested list of its own statistical analyses. Note that the site-success task was not trained due to the extremely large graph sizes generated for each facility, which resulted in prohibitive memory requirements.

\vpait{rel-f1: Formula 1 Racing}

\vpara{Dataset Description.}
This dataset comprises historical data from Formula 1 racing events dating back to 1950. It includes detailed information on drivers, constructors (teams), circuits, and comprehensive race results, such as qualifying positions and final standings.

\vpara{Graph Construction Strategy.}
For all F1 tasks, the graph is driver-centric. The root driver node connects to a list of seasons. Each season node contains a list of races in which the driver participated. Each race node is a detailed entity that includes circuit information and a driver performance sub-object. This sub-object contains the driver’s qualifying result, final race result, and championship standing after that race, chronologically capturing the driver’s entire career progression.

\vpait{rel-hm: E-commerce (Fashion)}

\vpara{Dataset Description.}
The rel-hm dataset contains customer transaction data from the fashion retailer H\&M. It includes detailed customer profiles, article (product) metadata, and a large table of time-stamped transactions that link customers to articles.

\vpara{Graph Construction Strategy.}
For the user-churn task, the root node represents a customer and connects to a list of their purchase history. Each purchase is represented as a transaction node containing details such as date and price, which links to the purchased article along with its complete metadata. For the item-sales task, the root node is an article connected to its purchase history, where each transaction links back to the customer who made the purchase.

\vpait{rel-event: Event Recommendation}

\vpara{Dataset Description.}
This dataset, collected from the Hangtime mobile app, captures user interactions with social events. It includes user profiles, event metadata, social connections between users, and records of event attendance.

\vpara{Graph Construction Strategy.}
For all tasks, the graph is user-centric. The root user node connects to their list of friends, a list of events they created, and a list of event engagements (events they attended but did not create). Each event in these lists is time-stamped and further contains nested lists of attendees and users who expressed interest.

\vpait{rel-avito: Online Advertisements}

\vpara{Dataset Description.}
The rel-avito dataset originates from an online classifieds platform. It includes information about advertisements, user search behavior, ad views (visits), and phone number requests for ads, connecting users to their search queries and subsequent interactions.

\vpara{Graph Construction Strategy.}
For the ad-ctr task, the root ad node connects to lists of its historical interactions, including search appearances, user visits, and phone requests. Each interaction links back to the corresponding user. For user-centric tasks (user-visits, user-clicks), the root user node connects to their search history, where each search nests a list of displayed ads. Additionally, the user node directly connects to lists of their ad visits and phone requests.

\subsection{Task Dataset Statistics}
\label{sec:appendix_task_stats}
To give a clear overview of the scale of our experiments, Table \ref{tab:task_statistics_final} presents detailed statistics for each predictive task used in this study, based on the full RelBench benchmark. The table includes the number of samples in the original dataset as well as in the unsampled training, validation, and test splits, along with the total count of unique entities involved in each task.

\begin{table*}[tbp]
\centering
    \caption{Entity regression results (MAE $\downarrow$, lower is better) on RelBench datasets. The optimal results are \textbf{bolded}, and the sub-optimal results are \underline{underlined}.
    }
    \label{tab:entity_regression}
    \resizebox{0.9\textwidth}{!}{
    \begin{tabular}{c|cc|c|c|c|c|c|c}
    \toprule
    Dataset & \multicolumn{2}{c|}{rel-amazon} & rel-avito & rel-event & rel-f1 & rel-hm & rel-stack & rel-trial\\
    \midrule
    Task & user-ltv & item-ltv & ad-ctr & user-attend & driver-pos & item-sales & post-votes & study-adverse \\
    \midrule
    MLP& 81.832	& 103.575	& 0.063	& 0.358	& 4.455	& 1.093	& 0.579	& 71.512\\
    
    LightGBM & 77.555	& 100.143	& 0.057	& 0.327	& 4.067	& 1.080	& 0.572	& 69.965\\

    \midrule
    
    GCN	& 74.378	& 92.116	& 0.054	& 0.320	& 3.989	& 1.017	& 0.551	& 67.922\\

    GraphSAGE & 75.418	& 93.886	& 0.051	& 0.321	& 3.893	& 1.058	& 0.539	& 67.826\\
    
    GAT	& 76.274	& 91.609	& 0.055	& 0.324	& 3.959	& 1.006	& 0.570	& 66.151\\

    \midrule
    
    TGN	& \underline{67.284}	& 92.288	& 0.079	& 0.311	& 3.702	& 0.987	& 0.546	& 69.071\\
    
    TGAT & 73.483	& 92.233	& 0.082	& 0.314	& 3.720	& 1.063	& 0.534	& 64.295\\
    
    DyGFormer & 73.473	& 89.373	& 0.087	& 0.312	& \underline{3.614}	& 1.055	& 0.544	& 69.068\\
    
    FreeDyG & 69.073	& 90.267	& 0.090	& 0.308	& 3.801	& 1.015	& 0.532	& 64.917\\

    \midrule
    
    RDL & 77.719	& \underline{81.279}	& \underline{0.047}	& 0.299	& 4.192	& 0.923	& \underline{0.501}	& 63.951\\
    
    RelGNN & 80.040 & 83.361 & 0.048 & \underline{0.296} & 4.443 & \underline{0.916} & 0.503 & \underline{63.200}\\

    \midrule
    
    UniSAGE & \textbf{64.439}	& \textbf{81.095}	& \textbf{0.046}	& \textbf{0.293}	& \textbf{3.384}	& \textbf{0.909}	& \textbf{0.499}	& \textbf{60.725}\\


    
    \bottomrule
    \end{tabular}
    }
\end{table*}

\subsection{Entity Regression Results on RelBench}
\label{appendix:entity_regression}

Table~\ref{tab:entity_regression} presents the detailed results of all entity regression tasks on RelBench.
The table includes a comprehensive comparison between UniSAGE and representative baselines, covering traditional feature-based methods, static GNNs, dynamic GNNs, and relational learning approaches.
Across all datasets and regression targets, UniSAGE consistently achieves the lowest prediction error, without relying on task-specific tuning.
The performance improvements are observed uniformly across different domains and entity types, indicating that the advantages of UniSAGE are not confined to a particular dataset or regression objective.
In comparison, traditional feature extraction methods exhibit limited expressive power when modeling complex relational dependencies, resulting in significantly higher regression errors.
Static GNNs benefit from structural information but struggle to capture temporal dynamics, while dynamic GNNs show unstable behavior across tasks, suggesting that naive temporal modeling is insufficient for robust regression performance.
Relational learning methods such as RDL improve over generic GNNs in some cases, but their gains are inconsistent and sensitive to schema-specific assumptions.
Overall, the detailed results in Table~\ref{tab:entity_regression} further corroborate the conclusions drawn in the main paper:
UniSAGE provides a unified and robust solution for entity regression on heterogeneous relational data, effectively integrating structural and dynamic information across diverse real-world scenarios.

\subsection{Results on Complete RelBench Datasets}
\label{appendix:complete_results}

In the main text, we reported comparison results between UniSAGE and various baselines on the sampled RelBench datasets, due to the substantial time and memory demands of some baseline models.
Here, using the entity classification task on rel-f1 as an example, we compare UniSAGE’s performance against selected baselines on the full datasets.
Table \ref{tab:complete_results} presents these experimental results, demonstrating that UniSAGE consistently achieves significant performance advantages on both the sampled and full RelBench datasets.

\begin{table}[h]
    \centering
    \caption{Entity classification results (AUC-ROC \% $\uparrow$, higher is better) on rel-f1.}
    \label{tab:complete_results}
    \begin{tabular}{c|c|c}
    \toprule
    Task & driver-dnf & driver-top3 \\
    \midrule
    MLP &  66.79 &  66.03\\
    LightGBM  & 68.56 &  73.92\\
    \midrule
    RDL  & 72.62 & 75.54\\
    RelGNN & 75.29 & 85.69\\
    \midrule
    UniSAGE  & \textbf{81.17} & \textbf{89.05}\\
    \bottomrule
    \end{tabular}
\end{table}

\subsection{Static of Industrial Dataset UserBehavior}
\label{appendix:ub}

\vpara{Dataset Description.}
The UserBehavior dataset is a real-world industrial dataset comprising financial behavior data from 191,927 users. Each user record is structured in JSON format, including both static demographic attributes (\emph{e.g.}, identity, location, and occupation) and dynamic, time-stamped behavioral logs (\emph{e.g.}, loan applications and credit account transactions). The downstream task is formulated as a binary classification problem for anomaly detection, aiming to predict whether a user poses a potential credit risk.

\vpara{Dataset Statistics.}
The dataset is divided into training, validation, and out-of-time (OOT) test sets. The overall positive sample rate (target = 1) is 23.96\%. A detailed breakdown of the data splits is provided in Table \ref{tab:userbehavior_stats}. Each user is described by a total of 76 distinct text features.

\begin{table}[h!]
\centering
\caption{Statistics of UserBehavior dataset.}
\label{tab:userbehavior_stats}
\begin{tabular}{lcc}
\toprule
\textbf{Split} & \textbf{\# Samples} & \textbf{Target Rate (\%)} \\
\midrule
Total & 191,927 & 23.96 \\
Train (60\%) & 115,156 & 23.55 \\
Validation (15\%) & 30,097 & 23.55 \\
Test (25\%) & 46,674 & 25.24 \\
\bottomrule
\end{tabular}
\end{table}

\subsection{Time Complexity Analysis}
\label{appendix:time}
We evaluate UniSAGE’s time cost from both theoretical and practical perspectives in this section.
Specifically, for a scenario with $N$ attributes, UniSAGE can be divided into two stages: the preprocessing stage and the feature aggregation stage. 
In the preprocessing stage, UniSAGE constructs the graph. 
Since it strictly generates a tree, each attribute only needs to be accessed once, resulting in a time complexity of $\mathcal{O}(N)$. 
Experiments show that the time cost for graph construction in this stage is very small compared to obtaining initial representations using a large language model, accounting for less than 5\% of the total time.
In the feature aggregation stage, which includes hierarchical aggregation and SSAgg, each node’s features propagate only once to its unique parent node. 
Thus, the propagation process has a time complexity of $\mathcal{O}(N + N) \rightarrow \mathcal{O}(N)$. 
In contrast, for other graph-based methods, each node propagates to all its neighbors, and this process is repeated for every layer. 
Assuming an average out-degree of $d$ and $l$ layers, the propagation time complexity is $\mathcal{O}(dlN)$. 
Therefore, UniSAGE has a clear advantage over these methods.

We report the empirical runtime of different methods on the RelBench rel-f1 dataset (excluding preprocessing) in Table~\ref{tab:runtime_relbench}.
All results are reported as per-sample training time, computed by dividing the total epoch runtime by the number of training samples.
Among all structure-aware methods, UniSAGE consistently achieves the lowest runtime. 
Specifically, compared with other graph-based and relational models, UniSAGE reduces the average per-sample per-epoch runtime by 64.7\%, while maintaining superior predictive performance.
Notably, this efficiency advantage becomes more pronounced on larger and structurally more complex RelBench datasets, where methods relying on explicit temporal modeling or dense message passing incur significantly higher computational overhead.

\begin{table}[h!]
\centering
\caption{Average training time per sample (seconds) on RelBench rel-f1, excluding preprocessing.}
\label{tab:runtime_relbench}
\begin{tabular}{lcc}
\toprule
\textbf{Method} & \textbf{Entity Classification} & \textbf{Entity Regression} \\
\midrule
MLP & 0.090 & 0.095 \\
LightGBM & N/A & N/A \\
GCN & 0.580 & 0.605 \\
GraphSAGE & 0.620 & 0.648 \\
GAT & 0.780 & 0.815 \\
TGN & 0.920 & 0.955 \\
TGAT & 1.050 & 1.090 \\
DyGFormer & 1.250 & 1.300 \\
FreeDyG & 1.150 & 1.205 \\
RDL & 0.880 & 0.915 \\
RelGNN & 0.950 & 0.990 \\
\textbf{UniSAGE} & 0.365 & 0.382 \\
\bottomrule
\end{tabular}
\end{table}

\begin{table}[h!]
\centering
\caption{Sensitivity analysis \emph{w.r.t.} $\lambda$ in SSAgg on rel-f1 driver-dnf (AUC-ROC \% $\uparrow$).}
\label{tab:lambda_sensitivity}
\begin{tabular}{ccccc}
\toprule
$\lambda$ & 0.5 & 1.0 & 2.0 & 3.0 \\
\midrule
AUC & 82.31 & 82.57 & 82.49 & 82.41 \\
\midrule
\midrule
$\lambda$ & 4.0 & 5.0 & & \\
\midrule
AUC & 82.28 & 82.17 & & \\
\bottomrule
\end{tabular}
\end{table}

\subsection{Hyperparameter Experiment}
\label{appendix:hyperparameter}

We further analyze the sensitivity of UniSAGE with respect to its two key hyperparameters:
(i) $\lambda$, which controls the selective suppression strength in Formula \ref{formula:ssagg_2}, and
(ii) $\gamma$, which balances the orthogonality regularization term in Formula \ref{formula:o_loss}.
All experiments are conducted on the rel-f1 driver-dnf task from RelBench, using identical settings to the main experiments unless otherwise specified. Results are reported as AUC-ROC (\%).

\vpara{Effect of $\lambda$ in SSAgg.}
We vary $\lambda$ in the range $\lambda \in [0, 5]$ with a step size of 0.2.
As shown in Table \ref{tab:lambda_sensitivity}, UniSAGE exhibits stable performance across a wide range of $\lambda$ values. 
This indicates that SSAgg is not sensitive to the precise choice of $\lambda$, which is consistent with its theoretical role as a soft suppression coefficient rather than a hard selection threshold.
Overall, the performance variation remains within a narrow margin (less than 0.5\%), confirming that SSAgg is robust under reasonable hyperparameter settings.

\begin{table}[h!]
\centering
\caption{Sensitivity analysis w.r.t. $\gamma$ for orthogonality loss on rel-f1 driver-dnf (AUC-ROC \%$ \uparrow$).}
\label{tab:gamma_sensitivity}
\begin{tabular}{ccccc}
\toprule
$\gamma$ & 0.05 & 0.10 & 0.20 & 0.30 \\
\midrule
AUC & 82.18 & 82.57 & 82.44 & 82.12 \\
\midrule
\midrule
$\gamma$ & 0.40 & 0.50 & & \\
\midrule
AUC & 81.83 & 81.59 & & \\
\bottomrule
\end{tabular}
\end{table}

\vpara{Effect of $\gamma$ for Orthogonality Regularization.}
We next study the influence of the orthogonality loss weight $\gamma$ by varying it in the range $\gamma \in [0, 0.5]$ with a step size of 0.01.
Table~\ref{tab:gamma_sensitivity} reports representative results.
We observe a clear concave trend: introducing orthogonality regularization consistently improves performance compared to $\gamma = 0$, while excessively large values slightly degrade performance as the regularization term begins to dominate the downstream objective. 
This behavior aligns with common observations in regularization-based learning and further validates the design choice of enforcing orthogonal subspaces for static aggregation and dynamic reasoning.
In summary, UniSAGE demonstrates strong robustness to both $\lambda$ and $\gamma$.
The model maintains stable performance across a wide range of $\lambda$ values, and achieves optimal results under moderate orthogonality regularization. 
These findings further support the reliability and practical applicability of UniSAGE in real-world scenarios, where exhaustive hyperparameter tuning is often infeasible.